\documentclass[conference]{IEEEtran}
\IEEEoverridecommandlockouts
\usepackage{amssymb, amsthm, amsmath, color, tikz, natbib}
\usepackage{url}

\usepackage{hyperref}
\newtheorem{theorem}{Theorem}
\newtheorem{lemma}[theorem]{Lemma}

\theoremstyle{remark}
\newtheorem{remark}[theorem]{Remark}
\theoremstyle{definition}
\newtheorem{definition}[theorem]{Definition}
\theoremstyle{definition}


\newcommand{\R}{\mathbb{R}}

\begin{document}
	
\title{Improving Metric Dimensionality Reduction with Distributed Topology 
\thanks{The first author was partially supported by the National Science Foundation under the grant “HDR TRIPODS: Innovations in Data Science: Integrating Stochastic Modeling, Data Representations, and Algorithms”, NSFCCF-1934964. The second and third authors were partially supported by the Air Force Office of Scientific Research under the grant “Geometry and Topology for Data Analysis and Fusion”, AFOSR FA9550-18-1-0266. The MR brain images from healthy volunteers used in this paper were collected and made available by the CASILab at The University of North Carolina at Chapel Hill and were distributed by the MIDAS Data Server at Kitware, Inc. \newline $^*$ Equal contribution}}

\author{
	Alexander Wagner$^*$\\
	Department of Mathematics, \\
	Duke University\\
	Durham, USA \\
	alexander.wagner@duke.edu
	\and
	Elchanan Solomon$^*$\\
	Department of Mathematics, \\
	Duke University\\
	Durham, USA \\
	yitzchak.solomon@duke.edu
	\and
	Paul Bendich\\
	Department of Mathematics, Duke University\\
	Geometric Data Analytics\\
	Durham, USA \\
	paul.bendich@duke.edu
}

\maketitle

\begin{abstract}
We propose a novel approach to dimensionality reduction combining techniques of metric geometry and distributed persistent homology, in the form of a gradient-descent based method called DIPOLE. DIPOLE is a dimensionality-reduction post-processing step that corrects an initial embedding by minimizing a loss functional with both a local, metric term and a global, topological term. By fixing an initial embedding method (we use Isomap), DIPOLE can also be viewed as a full dimensionality-reduction pipeline. This framework is based on the strong theoretical and computational properties of distributed persistent homology and comes with the guarantee of almost sure convergence. We observe that DIPOLE outperforms popular methods like UMAP, t-SNE, and Isomap on a number of popular datasets, both visually and in terms of precise quantitative metrics.    
\end{abstract}

\section{Introduction}
The goal of dimensionality reduction is to replace a high-dimensional data set with a low-dimensional proxy that has a similar shape. The terms \emph{similar} and \emph{shape} are not strictly defined, and a wealth of dimensionality reduction methods exists to accommodate the manifold ways of formulating the problem. Classical methods like PCA and MDS \citep{Kruskal:1964aa} are concerned with globally preserving variance or pairwise distances in a data set. These methods are of great utility in machine learning as a whole, but produce poor results on data sets that are not metrically low-dimensional. This has prompted the introduction of methods that emphasize preserving local structure over global structure, such as locally linear embeddings \citep{Roweis2323}, Laplacian eigenmaps \citep{belkin2003laplacian} and diffusion maps \citep{coifman2006diffusion}, Isomap \citep{Tenenbaum2319}, and more recently, t-SNE \citep{JMLR:v9:vandermaaten08a} and UMAP \citep{McInnes2018}. Each method has its own way of encoding a high-dimensional data set, e.g. via a connectivity graph, using random walks, as a probability distribution, etc., and its own scheme for preserving the metric, spectral, or distributional properties of that data structure in a low-dimensional embedding.

To our mind, there are two important limitations of the above-mentioned dimensionality reduction methods. Firstly, there are many cases in which some or all of the global geometry of a data set can be preserved by a low-dimensional embedding, but these local methods are not guaranteed to find such an embedding. Secondly, there are important global features of shapes that are not metric but topological, and as none of these local methods compute topological invariants, they cannot guarantee preservation of topological structure.

Our goal is to show that distributed persistence, as defined in \cite{2021arXiv210112288S}, can be used to augment local metric methods and address both of the above challenges simultaneously. To be precise, we will show that:
\begin{enumerate}
	\item Distributed persistence provides a scalable, parallelizable framework for incorporating topological losses into dimensionality reduction methods.
	\item Distributed persistence can provide strong guarantees for preservation of global structure, when feasible.
	\item Gradient descent with distributed persistence provably converges.
	\item Incorporating persistence into dimensionality reduction provides improved embeddings on a number of standard data sets.
\end{enumerate} 

We call our method \textbf{DIPOLE}: DIstributed Persistence-Optimized Local Embeddings. The word \emph{dipole} comes from the Greek $\delta \mbox{\'{i}} \zeta$ $\pi \mbox{\'{o}}\lambda o \zeta$ meaning \emph{doubled axes}. In our framework, the two axes are local geometry and global topology, which play both complementary and competitive roles in obtaining optimal embeddings. 

\subsection{Why persistent homology?}
Unlike more ad hoc methods for measuring the shape of data, persistent homology is rooted in algebraic topology and enjoys strong theoretical foundations. Persistent homology captures topology on many scales at once, and so is not limited by the need to fix a scale parameter. The output of persistent homology is differentiable almost everywhere as a function of the input data \citep{gameiro2016continuation} and \citep{poulenard2018topological}, allowing for its incorporation in gradient descent optimization schemes \citep{carriere2021optimizing}. Persistent homological features (barcodes, persistence diagrams, Betti curves, Euler curves, etc.) can be directly compared with one another \citep{cohen2007stability}, subjected to statistical analysis \citep{mileyko2011probability} and \citep{fasy2014confidence}, and transformed into feature vectors \citep{bubenik2015statistical} and \citep{adams2017persistence}.  

\subsection{Why distributed persistence?}
Persistence calculations scale poorly in the size of data sets \citep{otter2017roadmap}, are not robust to outliers \citep{buchet2014topological}, and leave out a lot of information about the shape of data \citep{curry2018fiber}. Distributed persistence overcomes these limitations through random subset sampling: instead of computing the persistence diagram of the full data set, one computes the persistence diagrams of many small subsets. This has the effect of producing a much faster and more robust invariant. Moreover, distributed persistence is considerably more informative than full persistence, as spaces with similar distributed persistences are necessarily quasi-isometric.

It is natural to ask why, if distributed persistence invariants contain global metric data, we do not simply use global metric data directly. There are three main reasons:
\begin{itemize}
	\item When global geometry-preserving embeddings are impossible (which is generally the case), preserving distributed persistence is not equivalent to preserving quasi-isometry type. The latter is futile whereas the former amounts to fixing the topology, which may indeed be possible.
	\item When there are embeddings that preserve global geometry within reasonable error, there may generally be many such embeddings, some of which are more topologically faithful than others. By working with distributed persistence, we can obtain a quasi-isometry that also has nice topological properties.
	\item It is sometimes possible to find an embedding that is a quasi-isometry on a subset of a space. An embedding method based on fixing topology may be able to capture the global geometry on this subset and the topological type of its complement.  
\end{itemize}     

We see DIPOLE as a dimensionality-reduction post-processing step. Starting with an initialization embedding, obtained by methods such as t-SNE, UMAP, or Isomap, DIPOLE corrects the topology using distributed persistence. The use cases of focus in this paper are data sets which exhibit low intrinsic dimension and marked topological and metric structures. Examples include manifolds, algebraic varieties, configuration spaces, graphs, and more general embedded simplicial complexes. For such spaces, it is often possible to find low-dimensional embeddings that preserve geometry on small scales and maintain large-scale topology.

\section{Prior Work}
There have been other lines of research aimed at incorporating topology into dimensionality reduction. \cite{doraiswamy2020topomap} aim to maintain global degree-zero persistence by preserving the minimal spanning tree of the original data embedding in the projection. \cite{shieh2011tree} study embeddings preserving the single-linkage dendogram of a data set. \cite{yan2018homology} consider an extension of landmark Isomap \citep{de2004sparse} that uses homology to pick optimal landmarks. \cite{moor2020topological} add a topological loss term to improve latent space representations of autoencoders. These approaches differ from ours in two ways: only \cite{moor2020topological} optimize on topological invariants directly, and none of the prior methods use distributed persistence.

\subsection{Outline of Paper}
Section \ref{sec:distpers} defines distributed persistence and states theorems from \cite{2021arXiv210112288S} highlighting its inverse properties. Section \ref{sec:dimred} explains the dimensionality reduction pipeline used in DIPOLE. Section \ref{sec:convergence} provides a proof of almost sure convergence for the gradient descent scheme underlying DIPOLE. Section \ref{sec:experiments} compares DIPOLE to other popular methods on a number of challenging data sets. Finally, Section \ref{sec:conclusion} summarizes the results of the paper and outlines directions for future research.

\section{Distributed Persistence}
\label{sec:distpers}
We provide a summary of the results from \cite{2021arXiv210112288S}, and indicate how they relate to the task of dimensionality reduction. Let $\lambda$ be an invariant of finite metric spaces.  Let $X$ be an abstract indexing set, and $\psi:X \to Z$ an embedding into a metric space $Z$. For $k \in \mathbb{N}$, we define the distributed invariant $\lambda_k$ that maps the labeled point cloud $(X,\psi)$ to the labeled set of invariants $\{\lambda(\phi(S)) \mid S \subseteq X, |S| = k\}$ for $k> 0$, and $\emptyset$ otherwise. Put another way, $\lambda_{k}(X,\psi)$ records the values of $\lambda$ on subsets of $\psi(X)$ of a fixed size, together with abstract labels identifying which invariant corresponds to which subset. In what follows, we will omit the embedding map $\psi$ and refer to $X$ as a metric space directly, unless it becomes necessary to distinguish between $X$ as an indexing set and $X$ as a metric space.

When the invariant $\lambda$ is expensive to compute, as is the case for persistent homology, $\lambda_{k}$ provides a scalable, parallelizable alternative. $\lambda_k$ enjoys all of the stability properties of $\lambda$ (cf. \cite{cohen2007stability}), since it is computed in the same way, but is generally more robust to outliers, since $\lambda_k$ is unchanged on those subsets of $X$ that do not contain an outlier. The main result of \cite{2021arXiv210112288S} is that the topology of many small subsets can be used to recover the isometry type of $X$ up to a multiplicative error term that is generally quadratic (but sometimes linear) in the size of the subsets taken:
\begin{theorem}
Let $\lambda^{m}$ be the invariant that associates to a point cloud the Rips persistence of its $m$-skeleton, and take $k > m > 0$. Let $\phi: X \to Y$ be a surjection\footnote{The original version of this result assumes $\phi$ is a bijection, but relaxing it to a surjection has no impact on the proof.} such that for all $S \subseteq X$ with $|S| \in \{k,k-1,\cdots, k-m-1\}$, $d_{B}(\lambda^{m}(S),\lambda^{m}(\phi(S))) \leq \epsilon$.\footnote{The output of $\lambda^{m}$ is a collection of persistence diagrams, one for each homological degree up to $m$. The bottleneck distance here is the maximum of the bottleneck distances across all degrees.} Then $\phi$ is a $112k^2\epsilon$ quasi-isometry, i.e. $\forall x_1, x_2 \in X$,
\[|d_{X}(x_1,x_2) - d_{Y}(\phi(x),\phi(y))| \leq 112k^2 \epsilon.\]
\label{thm:inv}
\end{theorem}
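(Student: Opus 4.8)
The plan is to show that each individual distance $d_X(x_1,x_2)$ can be read off, up to a controlled error, from the degree-$0$ part of the persistence diagrams of subsets containing $x_1$ and $x_2$, and then to transfer this reading to $Y$ using the bottleneck hypothesis. Since the bottleneck distance in the hypothesis is the maximum over homological degrees, we in particular have $d_B(\mathrm{Dgm}_0(S),\mathrm{Dgm}_0(\phi(S))) \le \epsilon$ for every admissible $S$, so it suffices to work with $0$-dimensional Rips persistence. For a finite metric space $S$ the degree-$0$ diagram is supported on the line $\{\text{birth}=0\}$, and its finite death times are exactly the edge weights of a minimal spanning tree of $S$; equivalently, the $i$-th smallest death time is the threshold at which single-linkage clustering performs its $i$-th merge. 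Thus the hypothesis translates, via an explicit computation of the bottleneck distance between diagrams on a common vertical line, into the statement that the sorted lists of MST edge weights of $S$ and $\phi(S)$ agree entrywise up to $\epsilon$ (with the usual diagonal caveats, and noting that when $\phi$ identifies points of $S$ the diagram of $\phi(S)$ has smaller cardinality, a degeneracy the diagonal matching absorbs).

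First I would establish a \emph{local extraction lemma}: given $x_1,x_2 \in X$, there is a recipe taking as input the degree-$0$ diagrams of a bounded family of subsets of sizes in $\{k,k-1,\dots,k-m-1\}$, all containing $\{x_1,x_2\}$, whose output approximates $d_X(x_1,x_2)$. The mechanism I have in mind uses the extremal (largest finite) death time together with comparisons across the size window: the largest MST edge of $S$ equals $\max_{A \sqcup B = S}\min_{a\in A,b\in B} d(a,b)$, so by deleting points of $S$ one at a time (moving through sizes $k, k-1, \dots$) one can track the merge in which the cluster of $x_1$ first joins the cluster of $x_2$ and isolate a death time $w$. If that merge joins clusters $A \ni x_1$ and $B \ni x_2$ across the realizing pair $(a^*,b^*)$, the triangle inequality gives $|d_X(x_1,x_2)-w| \le \mathrm{diam}(A)+\mathrm{diam}(B)$, so tight ambient clusters force $w$ to approximate $d_X(x_1,x_2)$. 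The role of the window of $m+2$ consecutive sizes, and of the higher skeleta when $m>1$, is to carry out this peeling while bookkeeping the features created or destroyed, so that the extracted quantity is a single pairwise distance rather than an uncontrolled aggregate.

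Next I would verify that the recipe is \emph{stable}: since it is built from finitely many order statistics of MST-weight lists, each preserved up to $\epsilon$ entrywise under $\phi$, the extracted value changes by a bounded multiple of $\epsilon$ per comparison. Applying the same recipe to $\phi(S)$ and $\phi(x_1),\phi(x_2)$ yields an approximation of $d_Y(\phi(x_1),\phi(x_2))$, and subtracting the two readings gives $|d_X(x_1,x_2)-d_Y(\phi(x_1),\phi(x_2))| \le C\epsilon$. The factor $k^2$ should emerge here from error accumulation: isolating one distance requires chaining on the order of $k$ merge-comparisons, each of which already aggregates on the order of $k$ entries whose individual errors are bounded by $\epsilon$, and carrying the explicit constants through should produce the stated $112k^2$. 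The collapsing case $\phi(x_1)=\phi(x_2)$ needs no separate argument: the same two-sided inequality forces $d_X(x_1,x_2)=|d_X(x_1,x_2)-0| \le C\epsilon$, with the cardinality drop in $\phi(S)$ handled by diagonal matching in the bottleneck distance.

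The main obstacle, and where I expect the real work to lie, is the local extraction lemma. Persistence sees distances only through connectivity (MST) events, and we have no freedom to place helper points—only to choose among the points of $X$ that happen to exist—so we cannot simply manufacture tight clusters around $x_1$ and $x_2$. Guaranteeing that some admissible subset exhibits $d_X(x_1,x_2)$ as a \emph{cleanly identifiable} death time, rather than one blurred by the geometry of the ambient cluster, is the crux, and is precisely what dictates both the need for the consecutive-size window and the quadratic loss in $k$. The remaining steps—the bottleneck-to-sorted-list translation and the stability bookkeeping—are routine once the extraction is in place.
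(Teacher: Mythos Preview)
This theorem is not proved in the paper under review; it is imported from \cite{2021arXiv210112288S} as background for the pipeline, with no argument given here. So there is no in-paper proof to compare your attempt against, and the assessment below is of your sketch on its own terms.

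The reduction ``bottleneck bound in degree $0$ $\Rightarrow$ sorted MST edge lists agree up to $\epsilon$ (modulo diagonal matches)'' is correct, and the overall architecture---extract $d_X(x_1,x_2)$ from subset diagrams, then transfer via stability---is the natural one. But the extraction step, which you rightly flag as the crux, has two obstructions that are more than unfilled detail. First, the degree-$0$ diagram is an \emph{unlabeled} multiset of death times; it does not record which components merge at each level, so ``track the merge in which the cluster of $x_1$ joins that of $x_2$'' is not an operation on diagrams. Passing from $S$ to $S\setminus\{p\}$ perturbs the sorted list, but the perturbation does not single out the $(x_1,x_2)$-merge among the others, and distinct metrics on $S$ can produce identical diagram pairs while that particular merge sits at different heights. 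Second, even granting access to the correct merge value $w$, your own bound $|d_X(x_1,x_2)-w|\le \mathrm{diam}(A)+\mathrm{diam}(B)$ is only useful when the flanking clusters are tight, and for a general $X$ there is no admissible $S$ guaranteeing this: if $x_1,x_2$ are joined by a dense chain of intermediate points of $X$, every $S\ni\{x_1,x_2\}$ of the allowed sizes merges them early through that chain, the clusters $A,B$ have diameters comparable to $d_X(x_1,x_2)$, and $w$ tells you nothing. The window of $m+2$ sizes does not by itself repair either issue, and your remark that higher skeleta enter only ``when $m>1$'' leaves the base case $m=1$ resting entirely on degree $0$, where these obstructions already bite. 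The hypothesis $m>0$ in the statement is a signal that the source proof uses more than connectivity data; closing your sketch would require a genuinely different extraction mechanism than merge-tracking.
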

Note that when $\epsilon = 0$, $\phi$ must be an isometry, showing that the invariant sending $X$ to the topology of many small subsets is injective. The above theorem extends injectivity by showing that the inverse is Lipschitz from the Bottleneck distance $d_{B}$ to the quasi-isometry metric. This result can be improved to give linear dependence on the subset size $k$, provided $\phi$ is known to restrict to a quasi-isometry on a small subset of $X$:

\begin{theorem}
	Let $k > m = 1$. Let $\phi: X \to Y$ be a surjection such that for all $S \subseteq X$ with $|S| \in \{k,k-1\}$, $W_{1}(\lambda^{1}(S),\lambda^{1}(\phi(S))) \leq \epsilon_1$.\footnote{The $W_{1}$ distance here is the sum of the $W_{1}$ distances for the zero and first degree persistent homology.} Suppose further that there is a subset $X' \subset X$ of size $(k-1)$ with
	\[\sum_{(x_i,x_j) \in X' \times X'} |d_{X}(x_i,x_j)- d_{Y}(\phi(x_i),\phi(x_j))| \leq \epsilon_2.\]
	Then $\phi$ is a $56(k+1)\epsilon_1 + 28\epsilon_2$ quasi-isometry.
	\label{thm:sparseinv}
	\label{thm:linearthm}
\end{theorem}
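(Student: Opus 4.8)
The plan is to bound the distortion $|d_X(x_1,x_2) - d_Y(\phi(x_1),\phi(x_2))|$ of each individual pair, and the strategy hinges on two facts that let degree-$0$ and degree-$1$ persistence ``read off'' specific pairwise distances of a subset. First, for the Vietoris--Rips filtration the reduced degree-$0$ diagram of a finite set $S$ consists of points $(0,w)$ where the $w$'s are the edge weights of a minimal spanning tree of $S$; hence the $W_1$ distance between the degree-$0$ diagrams of $S$ and $\phi(S)$ controls, up to a fixed constant, the $\ell^1$ discrepancy between their sorted tree weights. Second, a degree-$1$ class of a Rips complex dies exactly when a triangle is filled, so its death time equals the largest pairwise distance in that triangle; consequently the $W_1$ distance on degree-$1$ diagrams controls the distortion of these ``loop-closing'' distances. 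Since the hypothesis bounds the \emph{sum} of the degree-$0$ and degree-$1$ $W_1$ distances by $\epsilon_1$, each of these discrepancies is individually at most $\epsilon_1$, and --- crucially for a linear rather than quadratic bound --- the Wasserstein (as opposed to bottleneck) control lets us \emph{add up} the distortions of several features coming from a single subset without losing a factor of $k$.

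First I would exploit the anchor set $X'$. Because $\sum_{X' \times X'}|d_X - d_Y| \le \epsilon_2$, every pairwise distance internal to $X'$ is preserved up to $\epsilon_2$, so $X'$ serves as a rigid reference frame of $k-1$ points whose geometry is already known to be faithful. For an arbitrary point $x \in X$, I would then analyze the size-$k$ subset $X' \cup \{x\}$ (and, when needed, the size-$(k-1)$ subsets obtained by swapping one anchor point for $x$). Comparing the degree-$0$ persistence of $X' \cup \{x\}$ with that of its image, and subtracting off the already-controlled contribution of the tree restricted to $X'$, isolates the edges incident to $x$ and bounds the distortion of $x$'s distances to its near neighbors. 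To capture the remaining, larger distances from $x$ to the anchor points, I would arrange the target distance to be the loop-closing edge of a triangle spanned by $x$ and two anchor points inside $X' \cup \{x\}$, so that it appears as a degree-$1$ death time; the anchor-set control guarantees the two shorter edges of that triangle are faithful, so the degree-$1$ $W_1$ bound transfers directly to the distortion of $d_X(x,\cdot)$.

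With the distances from \emph{every} point to the anchor frame controlled, I would finish by handling an arbitrary pair $x_1,x_2$: place both points, together with a suitable collection of anchor points, into a single subset of size at most $k$ in which $d_X(x_1,x_2)$ is realized as a tree edge or a triangle-filling death, and read off its distortion from the persistence comparison. Summing the per-feature $\epsilon_1$ bounds over the $O(k)$ features of each subset, together with the $\epsilon_2$ coming from the reference frame, produces a bound of the advertised shape $56(k+1)\epsilon_1 + 28\epsilon_2$; the numerical constants come from the constant relating $W_1$ to sorted bar lengths and from the number of triangle inequalities chained together.

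The main obstacle I anticipate is the configuration lemma: guaranteeing that for any target distance there is a subset of size at most $k$, built from the anchor frame, in which that distance is genuinely \emph{visible} as a persistence feature --- a specific tree edge or a specific triangle-filling death --- rather than being masked by some other, shorter edge. Degenerate orderings of the pairwise distances can make the ``wrong'' triangle fill first, so the argument must either perturb the configuration or split into cases on the distance order. It is precisely the bookkeeping of how many such features must be summed, and of how the $\epsilon_2$ distortion of the anchor frame propagates through the chained triangle inequalities, that determines whether the final constant scales like $k$ or like $k^2$. Keeping that accounting linear, by leaning on the additivity of $W_1$ and the fixed reference frame, is the crux of the improvement over Theorem~\ref{thm:inv}.
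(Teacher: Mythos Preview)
The paper does not prove this theorem; Section~\ref{sec:distpers} explicitly presents Theorems~\ref{thm:inv} and~\ref{thm:linearthm} as a summary of results from \cite{2021arXiv210112288S}, and no argument is given here beyond the motivating paraphrase following the statement. So there is no in-paper proof to compare your proposal against.

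That said, your sketch is in the right spirit and matches the intuition the paper offers for the linear bound: use the anchor set $X'$ as a near-isometric reference frame, probe each remaining point by adjoining it to $X'$, and exploit the additivity of $W_1$ (as opposed to bottleneck) to sum distortions over $O(k)$ features without paying a factor of $k$. Two places where the outline would need real work before it becomes a proof: first, your reading of degree-$1$ deaths as ``the largest edge in a triangle'' is only literally correct when the bounding $2$-chain is a single simplex; in general the death time of a Rips $1$-class is the filtration value at which it first becomes a boundary, which is the maximum edge over \emph{some} bounding $2$-chain, not over a triangle you get to choose. Arranging a specific distance $d_X(x_1,x_2)$ to be the death of an identifiable class therefore requires a genuine combinatorial lemma, not just a triangle. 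Second, your plan to ``subtract off'' the degree-$0$ contribution of $X'$ from that of $X' \cup \{x\}$ assumes the minimum spanning tree of $X'$ survives as a subtree of the MST of $X' \cup \{x\}$, which is false in general (adding $x$ can delete edges of the old tree). You flag both issues under your ``configuration lemma'' caveat, and that is indeed where the substance of the argument lives; the constants $56$ and $28$ will only fall out once those lemmas are made precise.
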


One can paraphrase this result as follows: assuming that $\phi$ is already known to preserve  distances on a small subset $X' \subset X$, its maintaining of small-scale topology gives a linear bound on global metric distortion. This motivates a pipeline in which we initialize our embedding $\phi$ to preserve local distances, and add an extra term to our topological loss (the \emph{local metric regularizer})  that protects this local correctness, so that we do not violate the hypothesis of Theorem \ref{thm:linearthm} over the course of the optimization.

\begin{remark}
	\label{remark:sampling}
	In fact, one does not need to show that $\phi$ preserves topology on \emph{all} small subsets of cardinality $|S| \in \{k, \cdots, k-m-1\}$ to ensure a quasi-isometry, as a relatively small number of subsets suffices. Details and extensions of these results can be found in \cite{2021arXiv210112288S}.
\end{remark}

\section{Dimensionality Reduction}
\label{sec:dimred}
Having provided an overview of the definition and advantages of distributed persistence, we now go into the details of our pipeline. Let $(X,d_X)$ be a finite metric space, not necessarily embedded in Euclidean space. For a given $n \in \mathbb{N}$, our goal is to find a function $f: X \to \mathbb{R}^n$ that preserves important metric and topological features of $X$. To that end, we want to define a loss on embedding functions that is sufficiently differentiable to allow for gradient descent optimization. We define a loss with both a local metric term and a global topological term.

We begin with the local metric term. Let $P \subseteq X \times X$ be a subset of pairs of points in $X$. Intuitively, the pairs of points in $P$ should be thought of as being close to one another. One way of producing such a set $P$ is to pick a radius parameter $\delta > 0$ and define:
\[P= \{(x_i,x_j) \in X \times X \mid d_{X}(x_i,x_j) \leq \delta \}.  \]  
Another possibility is to pick an integer $\ell$ and define $P$ to consist of those pairs $(x_i,x_j)$ where $x_i$ is one of $x_j$'s $\ell$-nearest neighbors, or the reverse. 

With our set $P$ in hand, we can define the \emph{local metric regularizer functional}, $LMR_{P}$, defined as follows:
\[LMR_{P}(f) = \sum_{(x_1,x_2) \in P} \left( d_{X}(x_1,x_2) - \|f(x_1) - f(x_2)\| \right)^{2}. \]

For the global topological term of our loss, we turn to distributed persistence, and define the functional $DP_{k}^{m}(f)$ as follows:
\[DP_{k}^{m}(f) = \frac{1}{{|X| \choose k}}\sum_{\substack{S \subset X\\ |S| = k}} W_{p}^{p}(\lambda^{m}(S), \lambda^{m}(f(S))).  \]

where $W_{p}$ is the $p$-Wasserstein distance between diagrams. Since we are working with persistent homology in multiple degrees, we sum up the  $W_{p}^{p}$-distance among all degrees. Viewing the embedding $f$ as a matrix of shape $n \times |X|$, this functional is differentiable almost everywhere with respect to the entries of the matrix, so that gradient descent optimization can be applied. Due to the huge number of terms in the sum, and in light of Remark \ref{remark:sampling}, we opt for stochastic gradient descent, randomly sampling a small batch of subsets of size $k$ whose persistence are computed. Thus our pipeline is the following: we initialize our embedding $f_0$ and then apply some form of gradient descent to the functional $\alpha LMR_{P} + DP_{k}^{m}$. 

\section{Convergence}
\label{sec:convergence}
Our aim in this section is to prove that gradient descent on the functional $\Phi = \alpha LMR_{P} + DP_{k}^{m}$ converges almost surely. Assuming our point cloud consists of $s$ points in $\mathbb{R}^d$, we can encode it as an element of $\mathbb{R}^{ds}$. We write $X_0 \in \mathbb{R}^{ds}$ for the initial point cloud, and $X_n \in \mathbb{R}^{ds}$ for the point cloud after $n$ stochastic gradient descent steps. Our proof relies on the techniques developed in \cite{davis2020stochastic}, which we now recall.

Let $\Phi :\mathbb{R}^{ds} \to \mathbb{R}$ be locally Lipschitz, and let $\partial \Phi(x)$ denote the \emph{Clarke subdifferential} of $\Phi$ at a point $X \in \mathbb{R}^{ds}$. Starting at an initial point $X_0 \in \mathbb{R}^{ds}$, the \emph{stochastic subgradient method} consists of the following iteration: $X_{n+1} = X_{n} - \alpha_n (y_n + \xi_n)$, where $\alpha_n$ is a step size, $y_n \in \partial \Phi(X_n)$ is an element of the Clarke subdifferential, and $\xi_n$ is a sequence of random variables modeling noise. It is shown in \cite{davis2020stochastic} (Corollary 5.9) that the following four conditions (the first three are called \emph{Assumption C}) guarantee almost sure convergence:
\begin{enumerate}
	\item The sequence $\alpha_n$ is nonnegative, square summable ($\sum_{n} \alpha_n^2 < \infty$) but not summable ($\sum_{n} \alpha_n = \infty)$.
	\item The iterates are a.s. bounded, $\sup \|X_n\| < \infty$.
	\item The sequence $\{\xi_n\}$ is a martingale difference sequence, i.e. almost surely we have $\mathbb{E}[\xi_n] = 0$ and $\mathbb{E}[\|\xi_n\|^2] < p(x_n)$, where $p$ is some function bounded on bounded sets.
	\item $f$ is $C^d$-stratifiable.
\end{enumerate}

For definitions of technical terms like Clarke subdifferential or stratifiability, cf. \cite{davis2020stochastic}. 

\subsection{Our Setting}
Before proceeding to the proof, we explain how DIPOLE fits into the framework of \cite{davis2020stochastic}.

We perform gradient descent by randomly picking some number of subsets of size $k$, and using the gradients on those subsets to take a descent step. Since $\partial \Phi$ takes into account \emph{all} subsets, we need some way of subtracting off the gradients for all those subsets not sampled. This is accomplished by a careful definition of $\xi_n$. To begin, we randomly pick $b$ subsets $S_1, \cdots, S_b$ of size $k$, consider the sum $\hat{\Phi} = \frac{1}{b} \sum_{i=1}^{b} W_{p}^{p}(\lambda^{m}(S_i),\lambda^{m}(\phi(S_i))) + \alpha LMR_{P}$, and take a random element $v_n \in \partial \hat{\Phi}(x_n)$. It is this element $v_n$ that we use in DIPOLE. We therefore set $\xi_n = y_n - v_n$, so that $y_n - \xi_n = v_n$. In other words, we define $\xi_n$ to cancel out the effect of all unsampled subsets, leaving only the sampled gradient.

We now point out two important features of $\Phi$. Firstly, $\Phi$ is the sum of a smooth function (the local metric regularizer) and persistence functionals (which are locally Lipschitz), so $\Phi$ itself is locally Lipschitz, and the results of \cite{davis2020stochastic} apply. Secondly, since both the local metric regularizer and persistence functionals are translation invariant, $\Phi$ is also translation invariant. We use the translation invariance to replace the original sequence $X_n$ with a mean-centered sequence $\bar{X}_n$ that is easier to control.

\begin{definition}
	Let $X \in \mathbb{R}^{ds}$ represent a point cloud of $s$ points in $\mathbb{R}^d$. Define $\bar{X} \in \mathbb{R}^{ds}$ to be the result of mean-centering the point cloud. Identifying $\mathbf{R}^{ds}$ with the vector space of matrices $M_{d \times s}$, we have
	\[\bar{X} = X(I - \frac{1}{s}J) \]
	where $I$ is the $s \times s$ identity matrix, and $J$ is an $s \times s$ matrix containing only $1$s.
\end{definition}

The following theorem requires that $(X,P)$ be a connected subgraph of $X$. This is analogous to the assumption inherent in Isomap that the shortest path metric on the $k$-nearest-neighbors subgraph of a point cloud has no infinite values.

\begin{theorem}
Suppose $\alpha > 0$ and $(X,P)$ is a connected subgraph of $X$. Then the mean-centered sequence $\bar{X}_n$ converges to a critical point for $\Phi$, and $\Phi(X_n) = \Phi(\bar{X}_n)$ converges.
\end{theorem}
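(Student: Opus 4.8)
The plan is to verify the four hypotheses of Corollary 5.9 of \cite{davis2020stochastic} for the sequence produced by our scheme and then invoke that corollary directly. The first reduction I would make is to exploit translation invariance. Since both $\LMR_P$ and the persistence term $DP_k^m$ are translation invariant, every element of $\partial\Phi$ and of $\partial\hat\Phi$ lies in the mean-zero subspace $V \subset \R^{ds}$. Consequently the increment $\alpha_n v_n$ lies in $V$, the empirical mean of the point cloud is preserved by the iteration, and $X_n$ and $\bar X_n$ differ only by the fixed initial mean. In particular $\Phi(X_n) = \Phi(\bar X_n)$ for all $n$, and $\sup_n \|X_n\| < \infty$ if and only if $\sup_n \|\bar X_n\| < \infty$, so it suffices to analyze the mean-centered iterates, which obey the same stochastic subgradient recursion restricted to $V$.

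Next I would dispatch three of the four conditions, which require comparatively little work. For Condition (1) one simply fixes a schedule such as $\alpha_n = c/n$, which is nonnegative, square-summable, and not summable. For Condition (3), the defining relation $\xi_n = y_n - v_n$ together with uniform sampling of the $b$ subsets gives $\mathbb{E}[\hat\Phi \mid \mathcal{F}_n] = \Phi$; taking $y_n = \mathbb{E}[v_n \mid \mathcal{F}_n]$ then yields $\mathbb{E}[\xi_n \mid \mathcal{F}_n] = 0$, and the bound $\mathbb{E}[\|\xi_n\|^2] \le p(X_n)$ with $p$ bounded on bounded sets follows from the fact that $\LMR_P$ and the persistence functionals are locally Lipschitz, so their (sub)gradients are bounded on bounded sets. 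The one point needing care here is the interchange of Clarke subdifferential and expectation, namely that $\mathbb{E}[v_n \mid \mathcal{F}_n]$ can indeed be taken to lie in $\partial\Phi(X_n)$; this is where the regularity of the summands is used. For Condition (4), I would observe that $\LMR_P$ is semialgebraic and that the Rips persistence diagram, hence $W_p^p(\lambda^m(\cdot), \lambda^m(f(\cdot)))$, is a semialgebraic function of the coordinates, so $\Phi$ is semialgebraic and therefore admits a $C^d$ Whitney stratification.

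The crux of the argument, and the step I expect to be the main obstacle, is Condition (2): almost sure boundedness of the iterates. This is exactly where the hypotheses $\alpha > 0$ and the connectivity of $(X,P)$ enter. I would first prove that $\Phi$ is \emph{coercive on $V$}. If $\|\bar X\| \to \infty$ with the mean held at zero, then the diameter of the configuration diverges; taking a diameter-realizing pair together with a connecting path in the graph $(X,P)$, the triangle inequality forces at least one edge $(x_a, x_b) \in P$ to satisfy $\|f(x_a) - f(x_b)\| \to \infty$, since any such path has length at most $|X|-1$. As $d_X(x_a,x_b)$ is a fixed finite number, the corresponding summand of $\LMR_P$ diverges, and because $\alpha > 0$ and $DP_k^m \ge 0$ we conclude $\Phi \ge \alpha\,\LMR_P \to \infty$. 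Connectivity is essential here: without it one could translate a component of the cloud off to infinity while keeping $\LMR_P$ bounded. With coercivity in hand, $\Phi$ serves as a Lyapunov function for the subgradient flow on $V$, and a standard stochastic-stability argument—using the square-summability of $\alpha_n$ and the variance bound from Condition (3) to control the noise—upgrades coercivity to almost sure boundedness of $\bar X_n$. Having verified all four conditions for the mean-centered sequence, Corollary 5.9 of \cite{davis2020stochastic} yields that $\bar X_n$ converges to a critical point of $\Phi$ and that $\Phi(\bar X_n) = \Phi(X_n)$ converges, which is the claim.
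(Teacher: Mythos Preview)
Your overall plan—verify the four hypotheses of Corollary~5.9 in \cite{davis2020stochastic} for the mean-centered iterates—is exactly the paper's strategy, and your treatment of Conditions~(1), (3), (4) matches the paper closely. Your reduction to the mean-centered sequence is in fact a bit slicker than the paper's: you observe that translation invariance forces every Clarke subgradient to lie in the mean-zero subspace $V$, so the empirical mean is conserved and $X_n - \bar X_n$ is constant. The paper instead argues that the mean-centering projection $T(X) = X(I - \tfrac{1}{s}J)$ commutes with taking a descent step, reaching the same conclusion by a different route.

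The substantive divergence is in Condition~(2). You establish coercivity of $\Phi$ on $V$ (correctly, using connectivity of $(X,P)$, $\alpha>0$, and $DP_k^m \ge 0$) and then appeal to a ``standard stochastic-stability argument'' to upgrade coercivity to almost-sure boundedness. That last step is where your proposal is thin: for nonsmooth, nonconvex, locally Lipschitz $\Phi$, coercivity alone does \emph{not} obviously bound stochastic subgradient iterates—there is no descent lemma, and $\Phi(X_{n+1})$ can exceed $\Phi(X_n)$—so the Lyapunov argument you invoke needs to be spelled out rather than asserted. The paper takes a different tack here: it argues directly about the dynamics, showing that once an edge in $P$ is very long, neither the $\LMR_P$ gradient nor the topological gradient can push it further outward (the latter because a long edge corresponds to a persistence point with large birth or death, which the Wasserstein matching drives toward the bounded target diagram or the diagonal). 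Your coercivity observation is cleaner as a static statement about $\Phi$, but the paper's dynamics argument is what actually closes the boundedness gap; you should either supply the missing Lyapunov details or adopt an analysis of the gradient directions along the lines of the paper.
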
 

\subsection{The Proof}

Before working through the four conditions guaranteeing a.s. convergence in \cite{davis2020stochastic}, we justify replacing $X_n$ with $\bar{X}_n$. In order to apply the convergence results of \cite{davis2020stochastic} to $\bar{X}_n$, we need to show that $\bar{X}_n$ also arises from a stochastic gradient descent scheme. The subtlety here is the following: $\bar{X}_n$ and $\bar{X}_{n+1}$ are obtained by mean-centering $X_n$ and $X_{n+1}$, respectively, but it is not necessarily the case that a gradient descent step at $\bar{X}_{n}$ produces $\bar{X}_{n+1}$, i.e. that mean-centering commutes with taking gradients. A failure of commutativity would mean that $\{\bar{X}_n\}$ is not the product of an actual stochastic gradient descent scheme, and hence is not eligible for the application of prior convergence results.

To show commutativity, we observe that $\bar{X}$ sits inside the subvariety $\mathcal{X}$ of $M_{d \times s}$ consisting of mean-centered point clouds. $\mathcal{X}$ can also be characterized as the range of the projection $T(X) = X(I - \frac{1}{s}J)$. As $I - \frac{1}{s}J$ is symmetric, $T$ is an orthogonal projection. Given a point $X \in \mathcal{X}$ and a gradient $\epsilon \in \partial \Phi(X)$ in the full tangent space to $M_{d \times s}$, we then know that the orthogonal projection of $\epsilon$ to the tangent space at $\mathcal{X}$ is $T(\epsilon)$. Thus, if $X_{n+1} = X_{n} + \alpha_n \epsilon$,
\begin{align*}
	\bar{X}_{n+1} & = T(X_{n+1})\\
	& = T(X_{n} + \alpha_n \epsilon)\\
	& = T(X_n) + T(\alpha_n \epsilon)\\
	& = \bar{X}_{n} + \alpha_n T(\epsilon).
\end{align*}

This demonstrates that $\bar{X}_{n+1}$ is obtained from $\bar{X}_n$ by taking a descent step in $\mathcal{X}$, and so mean-centering commutes with gradient descent. Lastly, since $\mathcal{X}$ is isometric to $\mathbb{R}^{(d-1) \times s}$, the results of \cite{davis2020stochastic}, framed in Euclidean space, apply equally well to $\mathcal{X}$. The major advantage of working with $\{\bar{X}_n\}$ is the following lemma.
\begin{lemma}
The sequence $\{\bar{X}_n\}$ is a.s. bounded. 
\end{lemma}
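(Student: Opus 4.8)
The plan is to deduce a.s. boundedness from two ingredients: first, that $\Phi$ is \emph{coercive} on the mean-centered subvariety $\mathcal{X}$, so its sublevel sets are compact; and second, that the iteration has a \emph{dissipative drift} with respect to the Lyapunov function $V(\bar{X}) = \tfrac{1}{2}\|\bar{X}\|^2$, which confines the iterates. Both ingredients are exactly where the hypothesis that $(X,P)$ is connected is used, entering through the graph Laplacian $L$ of $(X,P)$.

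First I would establish coercivity. Since $\Phi \geq \alpha \, LMR_P \geq 0$, it suffices to bound the set $\{\bar{X} \in \mathcal{X} : LMR_P(\bar{X}) \leq c\}$. If $LMR_P(\bar{X}) \leq c$, then every edge $(x_i,x_j) \in P$ satisfies $\|\bar{x}_i - \bar{x}_j\| \leq d_X(x_i,x_j) + \sqrt{c}$, and chaining this bound along paths in a spanning tree of the connected graph $(X,P)$ bounds every pairwise distance, hence the diameter, by a constant $D(c)$. Because $\bar{X}$ is mean-centered, $\bar{x}_i = \tfrac{1}{s}\sum_j (\bar{x}_i - \bar{x}_j)$, so $\|\bar{x}_i\| \leq \mathrm{diam}(\bar{X}) \leq D(c)$ and $\|\bar{X}\| \leq \sqrt{s}\, D(c)$. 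Thus $\Phi|_{\mathcal{X}}$ is coercive and its sublevel sets are compact.

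Next I would track $V(\bar{X}_n)$ along the iteration $\bar{X}_{n+1} = \bar{X}_n - \alpha_n T(v_n)$. Since $T$ is the orthogonal projection onto $\mathcal{X} \ni \bar{X}_n$, we get $V(\bar{X}_{n+1}) = V(\bar{X}_n) - \alpha_n \langle \bar{X}_n, v_n \rangle + \tfrac{\alpha_n^2}{2}\|T(v_n)\|^2$, and conditioning on $\mathcal{F}_n$ (using that $v_n$ is an unbiased estimate of a subgradient $y_n \in \partial\Phi(\bar{X}_n)$) reduces the drift to $-\alpha_n \langle \bar{X}_n, y_n \rangle$. The central computation is that the $LMR_P$ part of any subgradient is dissipative: a direct calculation gives
\[ \langle \bar{X}, \nabla LMR_P(\bar{X}) \rangle = 2 \sum_{(x_i,x_j) \in P} \|\bar{x}_i - \bar{x}_j\| \, \bigl( \|\bar{x}_i - \bar{x}_j\| - d_X(x_i,x_j) \bigr), \]
and since $\sum_{(x_i,x_j) \in P} \|\bar{x}_i - \bar{x}_j\|^2 = \mathrm{tr}(\bar{X} L \bar{X}^T) \geq \mu_2 \|\bar{X}\|^2$, with $\mu_2 > 0$ the algebraic connectivity of $(X,P)$, this is at least $2\mu_2\|\bar{X}\|^2 - C\|\bar{X}\|$. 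The persistence subgradients are controlled (the Rips persistence of a $k$-point subset is $1$-Lipschitz in the points, so for $W_1$ they contribute only $O(\|\bar{X}\|)$ to the drift), so overall $\langle \bar{X}_n, y_n \rangle \geq 2\alpha\mu_2 \|\bar{X}_n\|^2 - C'\|\bar{X}_n\|$. Combining this with a locally bounded second-moment bound on $\|v_n\|^2$ yields an inequality of the form
\[ \mathbb{E}[V(\bar{X}_{n+1}) \mid \mathcal{F}_n] \leq (1 + C\alpha_n^2)\, V(\bar{X}_n) - \alpha_n \delta_n + C'\alpha_n, \]
where $\delta_n$ is strictly positive once $\|\bar{X}_n\|$ exceeds a fixed threshold. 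Using $\sum_n \alpha_n^2 < \infty$ and $\sum_n \alpha_n = \infty$, a Robbins--Siegmund supermartingale argument then forces $V(\bar{X}_n)$, and hence $\bar{X}_n$, to remain a.s. bounded.

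The hard part will be this last step. Coercivity and the spectral-gap estimate are clean, but converting the dissipative drift into genuine a.s. boundedness is delicate because $\Phi$ is non-smooth (only locally Lipschitz and tame) and the term $C'\alpha_n$ is \emph{not} summable, so plain Robbins--Siegmund does not apply directly; one must localize the supermartingale to the region where the negative drift $-\alpha_n \delta_n$ dominates $C'\alpha_n$, and simultaneously bound $\mathbb{E}[\|v_n\|^2 \mid \mathcal{F}_n]$ without presupposing boundedness. Controlling the persistence subgradients is the most technical point, since for $W_p$ with $p > 1$ their growth is only sublinear relative to the quadratic $LMR_P$ drift rather than bounded, and this comparison is what must be made rigorous.
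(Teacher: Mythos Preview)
Your proposal takes a substantially different route from the paper. The paper argues directly at the level of individual edges: if the diameter were to blow up, connectedness of $(X,P)$ forces some edge in $P$ to grow without bound; the $LMR_P$ gradient on that edge is always contractive, and the topological gradient on a very long edge corresponds to a persistence point with large birth or death which, in the optimal Wasserstein matching to the \emph{bounded} target diagram, is sent either to a bounded target point or to its diagonal projection and is therefore also pushed inward. Bounded diameter together with mean-centering then bounds $\|\bar{X}_n\|$. This is a short, qualitative argument about what each gradient component can and cannot do to a long edge. Your approach is instead quantitative: you set up the Lyapunov function $V=\tfrac12\|\bar X\|^2$, extract a dissipative drift $\langle \bar X,\nabla LMR_P(\bar X)\rangle\ge 2\alpha\mu_2\|\bar X\|^2-C\|\bar X\|$ from the algebraic connectivity $\mu_2>0$, bound the persistence contribution, and close with a Robbins--Siegmund supermartingale. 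The payoff of your route, if it goes through, is an honest stochastic argument with explicit control of the noise $\xi_n$; the paper's argument is more heuristic on that point. Your coercivity step (bounded $LMR_P$ sublevel $\Rightarrow$ bounded diameter via a spanning tree $\Rightarrow$ bounded $\|\bar X\|$ via mean-centering) is correct and is exactly the paper's second step, rephrased through the graph Laplacian.

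There is, however, a genuine gap in your drift comparison. You assert that for $p>1$ the persistence subgradients are ``sublinear relative to the quadratic $LMR_P$ drift,'' but for $p=2$ --- the value fixed in the paper --- this fails. The gradient of $W_2^2$ with respect to a diagram point scales linearly in that point, the persistence map is $1$-Lipschitz, and the diagram points of a $k$-subset are $O(\mathrm{diam}(\bar X))$; hence the persistence subgradient has norm $O(\|\bar X\|)$ and its inner product with $\bar X$ is $O(\|\bar X\|^2)$, the \emph{same} order as the $LMR_P$ drift, with a constant that does not carry the factor $\alpha$. For small $\alpha$ the $LMR_P$ term therefore need not dominate, and your claimed inequality $\langle \bar X_n,y_n\rangle\ge 2\alpha\mu_2\|\bar X_n\|^2-C'\|\bar X_n\|$ does not follow. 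To repair this you would have to show that the persistence drift is itself eventually nonnegative at large scales --- i.e.\ that sufficiently far persistence points are matched to the diagonal and the resulting gradient contracts the simplices that realize them --- which is exactly the informal mechanism the paper invokes. In other words, your Lyapunov machinery does not bypass the paper's edge-level sign argument; for $p\ge 2$ it still depends on it.
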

\begin{proof}
We first show that the diameter of the mean-centered point clouds $\{\bar{X}_n\}$ is a.s. bounded. Note that the diameter of a point cloud can be bounded without the point cloud being bounded; for a simple example, consider a single point (diameter zero) wandering off to infinity.

Suppose that the diameter of our point cloud is not bounded. Since $(X,P)$ is connected, some edge in $P$ is growing without bound, and hence the local metric regularizer is growing without bound. Clearly, the gradient term coming from the local metric regularizer cannot cause this to happen, so the only possibility is that this is an effect of the topological loss. However, a topological gradient on a very long edge corresponds to a point in a persistence diagram with very large birth or death time, and since the target distributed persistence is bounded, a Wasserstein matching will either push such a point closer to a bounded region in the upper-half quadrant, or else its projection on to the diagonal, and in either case it will not grow without bound.

To conclude that the point clouds $\{\bar{X}_n\}$ are bounded, suppose again the converse. This means that some coordinate of some point is growing without bound. In order for the point cloud to remain mean-centered, the same coordinate of another point must also be growing without bound, and of the opposite sign. This forces the diameter of the point cloud to grow without bound, which we have already shown is impossible. Thus, the mean-centered sequence is a.s. bounded.
\end{proof}

 We now work through the four conditions guaranteeing a.s. convergence, noting that mean-centering has no effect on local quantities like the errors $\xi_n$.

(1) The condition on the learning rate $\alpha_n$ is easily satisfied by setting $\alpha_n = \frac{1}{n}$, as well as any other of a number of annealing schemes. 

(2) This was shown in the Lemma above.

(3) To see that $\mathbb{E}[\xi_n] = 0$, observe that any subset of $X$ is equally likely to be chosen when generating $\xi_n$, so the expected value of $v_n$ is an element of the full Clarke subdifferential $\partial \Phi(X_n)$. Since $X_n$ is generically non-critical, $\partial \Phi(X_n)$ consists of a single vector, so that $\mathbb{E}[v_n] = y_n$ and $\mathbb{E}[\xi_n] = 0$. To see that the variance of $\xi_n$ is bounded, note that $\Phi$ is the sum of finitely many functions, each of which has subgradient bounded when the point cloud has bounded diameter. Thus the subgradients of $\Phi$ and $\hat{\Phi}$ are likewise bounded on bounded sets, and so the same is true for $\|\xi_n\|^2$, even without having to take expectations.

(4) The last condition, that $\Phi$ is $C^d$-stratifiable, follows from persistence functionals being locally linear (details can be found in \cite{carriere2021optimizing}).

\section{Experiments}
\label{sec:experiments}
The exact formula for the loss function used in experiments is given in Equation \eqref{eqn:dipole}. Like other dimensionality reduction pipelines, DIPOLE depends on the choice of hyperparameters. Our default choices are:
\begin{itemize}
\item The default initialization embedding is produced via Isomap. Isomap is fast and non-stochastic, and is designed to preserve local distances, a criterion in line with Theorem \ref{thm:sparseinv}.
\item When the input data is a point cloud in Euclidean space, it is transformed into a distance matrix by taking the geodesic distance on the $m_1$-nearest-neighbor graph, for $2 \leq m_1 \leq 10$.
\item We set the learning rate in $\{0.1,1\}$, and the tradeoff parameter $\alpha \in \{0.01,0.1\}$.
\item The set $P$ is chosen to consist of the $m_2$-nearest neighbors of points in $X$. We typically take $2 \leq m_2 \leq 5$.
\item We fix $p = 2$ in the Wasserstein distance.
\item We run $2500$ descent steps, using the annealing factor $\frac{1000}{1000 + \mbox{step}}$.
\end{itemize} 

The target dimension and subset size parameter $k$ are not fixed in advance, and are considered user-defined parameters.\\

In subsection \ref{sec:visualization} we introduce some classic datasets and visually compare DIPOLE with t-SNE or UMAP. In subsection \ref{subsec:quant} we conduct a quantitative comparison for those same datasets, across a host of parameter values and quality measures, to get a more comprehensive picture of how DIPOLE compares with other, well-known methods. \footnote{DIPOLE source code and scripts to run the experiments can be found at \url{https://github.com/aywagner/DIPOLE}}

\subsection{Visualization Experiments}
\label{sec:visualization}
In this section, we apply DIPOLE to a number of datasets, and visually compare the results with other dimensionality-reduction methods. The datasets we consider are:
\begin{itemize}
	\item A subsample of the mammoth data set from \cite{mammoth}. Cf. Figure \ref{fig:mammoth}.
	\item A point cloud sampled from a brain artery tree, taken from \cite{BULLITT20051232} and analyzed in \cite{bendich2016persistent}. Cf. Figure \ref{fig:brain}.
	\item A swiss roll data set of $3000$ points with the interior of the cylinder $x^2 + (y-1)^2 = 25$ subsequently removed. Cf. Figure \ref{fig:swisshole}.
	\item The \emph{Stanford faces} dataset, consisting of $698$  synthetic face images of  $64 \times 64$ resolution, considered as a point cloud in $\mathbb{R}^{4096}$. The faces were captured as 2D images for a variety of pose and lighting parameters. Cf. Figure \ref{fig:stanfordfaces}.
\end{itemize}

For each experiment, we compare the initial point cloud (or its Isomap visualization), DIPOLE, t-SNE or UMAP, and a degenerate version of DIPOLE where only the local metric regularizer is active (corresponding to $\alpha = 1.0$ in Equation \eqref{eqn:dipole}), denoted \emph{local metric regularizer} (\emph{lmr} for short). For the local metric regularizer experiments, we increase $m_2$ to $10$ to improve its performance, giving local geometry a better chance of enforcing global structure. We see in many cases that local metric regularization alone does not suffice to produce an optimal embedding, and that the differences between its embedding and that obtained by DIPOLE are notably topological, in the relative configurations of distinct components or the appearance of important cycles.

\begin{center}
	\begin{figure}[htb!]
\includegraphics[width=0.5\textwidth]{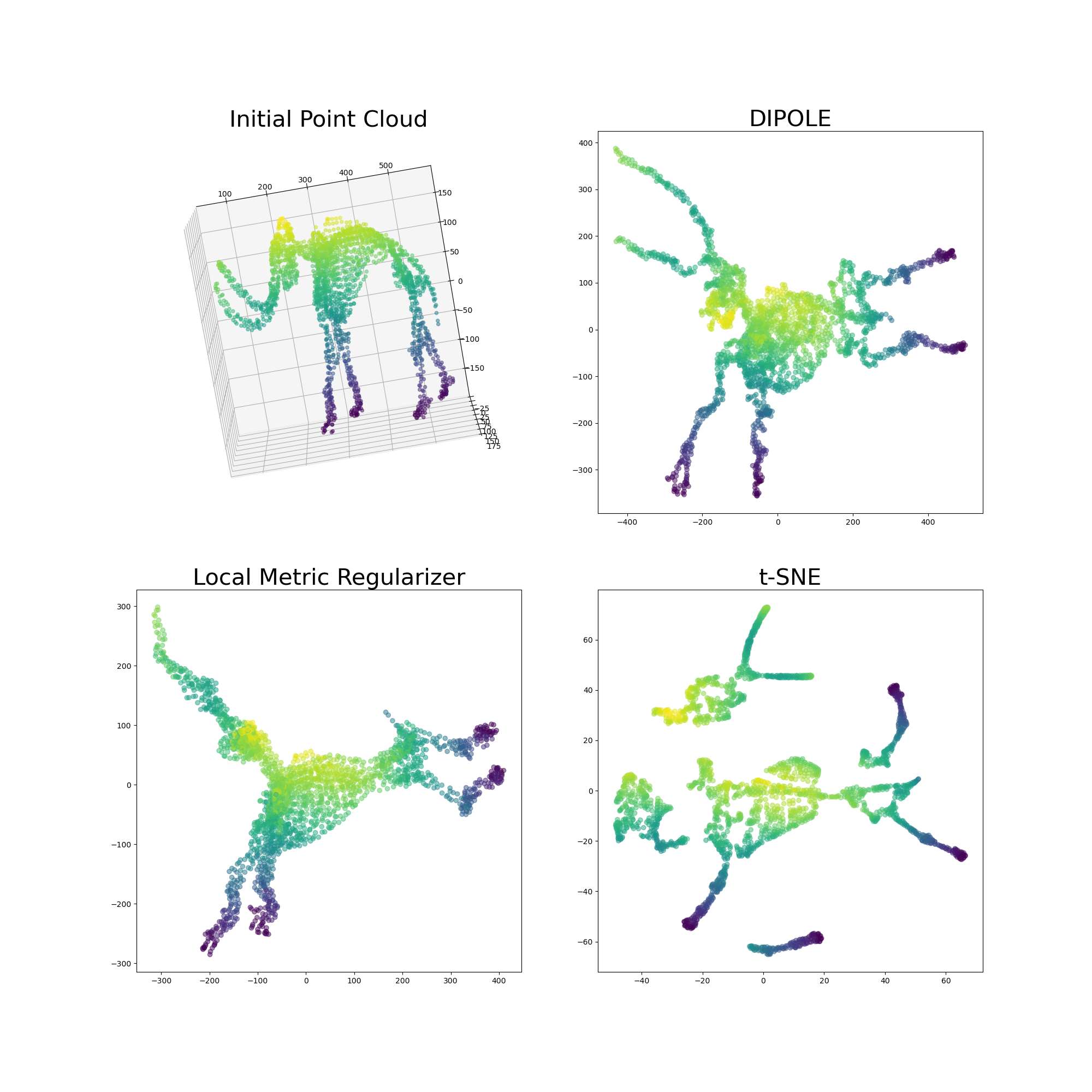}
\caption{Mammoth data set. Top Left: Initial point cloud. Top Right: DIPOLE with $m_1 =5 , m_2 = 3, \alpha = 0.1, k = 64, lr = 1.0$. Bottom Left: Setting $m_2 = 10$ and $\alpha = 1.0$, i.e. using only the local metric regularizer. Bottom Right: t-SNE default parameters.}
\label{fig:mammoth}
\end{figure}
\end{center}

\begin{center}
	\begin{figure}[htb!]
		\includegraphics[width=0.5\textwidth]{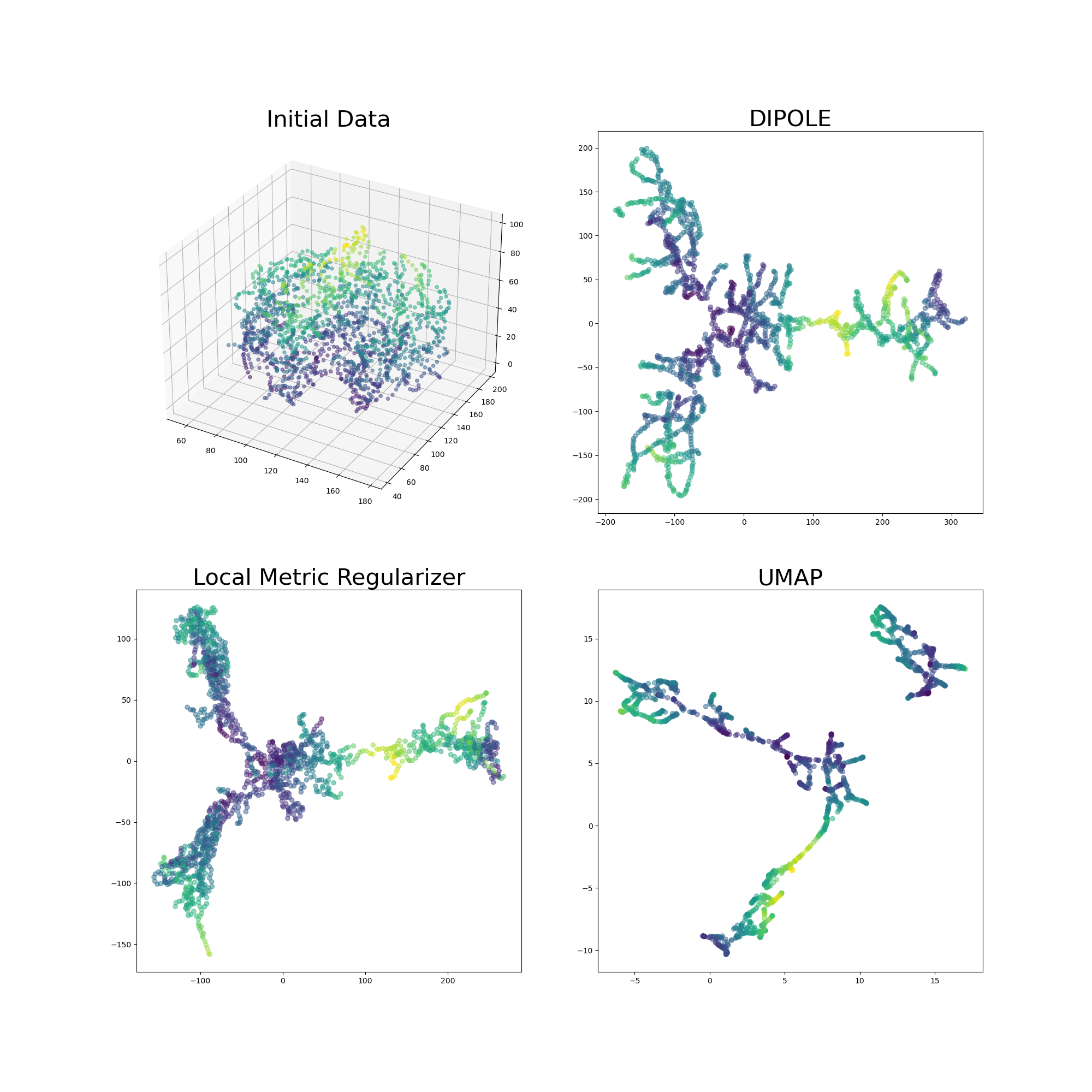}
		\caption{Brain Artery Tree data set. Top Left: Initial point cloud. Top Right: DIPOLE with $m_1 =5 , m_2 = 3, \alpha = 0.1, k = 64, lr = 1.0$. Bottom Left: Setting $m_2 = 10$ and $\alpha = 1.0$, i.e. using only the local metric regularizer. Bottom Right: UMAP default parameters.}
		\label{fig:brain}
	\end{figure}
\end{center}

\begin{center}
	\begin{figure}[htb!]
		\includegraphics[width=0.5\textwidth]{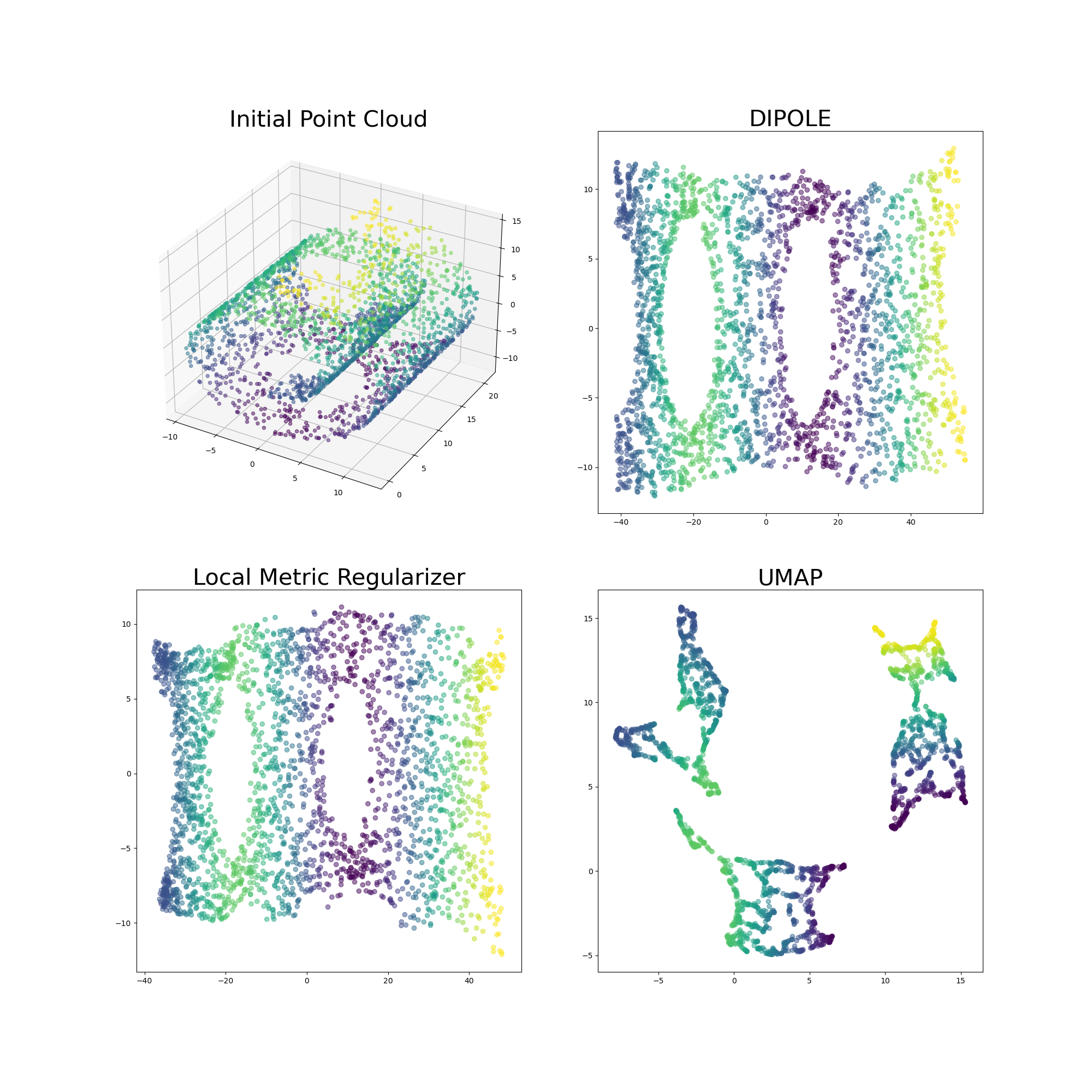}
		\caption{Swiss Roll with Holes data set. Top Left: Initial point cloud. Top Right: DIPOLE with $m_1 =10 , m_2 = 3, \alpha = 0.1, k = 64, lr = 0.1$. Bottom Left: Setting $m_2 = 10$ and $\alpha = 1.0$, i.e. using only the local metric regularizer. Bottom Right: UMAP default parameters.}
		\label{fig:swisshole}
	\end{figure}
\end{center}

\begin{center}
	\begin{figure}[htb!]
		\includegraphics[width=0.5\textwidth]{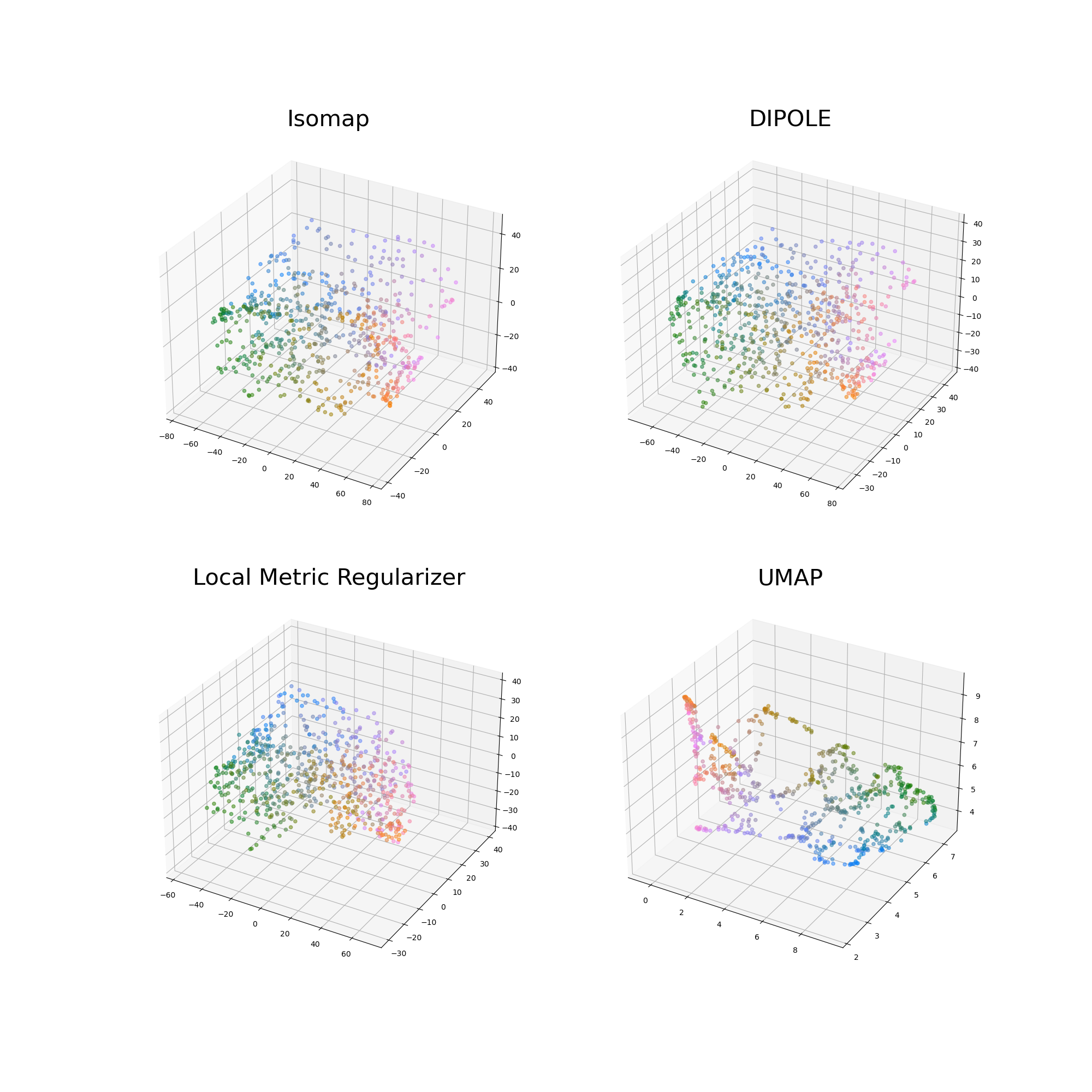}
		\caption{Stanford Faces data set. Top Left: Isomap embedding. Top Right: DIPOLE with $m_1 =5 , m_2 = 5, \alpha = 0.1, k = 32, lr = 1.0$. Bottom Left: Setting $m_2 = 10$ and $\alpha = 1.0$, i.e. using only the local metric regularizer. Bottom Right: UMAP default parameters.  The coloring scheme is RGB, with the pose parameter giving the red scale, and the lighting parameter the blue scale (green was fixed at $0.5$).}
		\label{fig:stanfordfaces}
	\end{figure}
\end{center}

\subsection{Quantitative Analysis}
\label{subsec:quant}
We now perform a quantitative comparison of DIPOLE with other dimensionality reduction methods. Quantitative comparison of different dimensionality reduction techniques is an active area of research without a standard test \citep{wang2020understanding}. The framework we adopt assumes that a dimensionality reduction method has access to three inputs: a point cloud $X \subseteq \R^D$, a target dimension $d < D$ in which to embed $X$, and a specification of an intrinsic metric on $X$, e.g. the geodesic metric on the graph whose edges correspond to the $k$ nearest neighbors of each point in $(X, \| \cdot \|)$. We then compare dimensionality reduction techniques by their capacity to preserve this given intrinsic metric. That is, if $(X, d_H)$ is the user-specified intrinsic metric on $X$, and $(X, d_L)$ is the metric on $X$ given by pulling back the Euclidean metric of its projection to $\mathbb{R}^d$, we are interested in measuring how well $(X, d_L)$ maintained the structure of $(X, d_H)$ according to various tests. It should be noted that all the tests are defined such that lower values represent superior performance. 

The first test we consider is the $ijk$-test. This test measures the probability that the numerical order of $d_H(x_i, x_j), d_H(x_i, x_k)$ is preserved in $d_L$. Precisely, consider the random variable $Z$ defined as follows. Randomly select $x_i, x_j, x_k$ uniformly from the set $X$. If either $d_H(x_i, x_j) \leq d_H(x_i, x_k)$ and $d_L(x_i, x_j) \leq d_L(x_i, x_k)$ or $d_H(x_i, x_j) \geq d_H(x_i, x_k)$ and $d_L(x_i, x_j) \geq d_L(x_i, x_k)$ then $Z$ is $1$; otherwise, $Z$ is $0$. The measure of interest is then $1 - E[Z]$, which we estimate empirically with $10000$ samples.

The second test is the residual variance. Specifically, this is $1 - R^2$ where $R$ is the Pearson correlation coefficient between the distance matrices corresponding to $d_H$ and $d_L$, i.e. it is the cosine of the angle between $d_H$ and $d_L$ viewed as mean-centered vectors. This test was used in \cite{Tenenbaum2319} to quantitatively evaluate the success of Isomap.

The third and fourth tests are the $2$-Wasserstein distance between approximations of the global persistence diagrams in degree $0$ and $1$ of $d_H$ and $d_L$. Specifically, we perform farthest point sampling on $d_H$ and $d_L$ to obtain two subsets of size $256$, compute the Rips persistence of the two metric subspaces, and finally report the $2$-Wasserstein distance between the persistence diagrams.

The five dimensionality reduction techniques that we compare are Isomap, t-SNE, UMAP, DIPOLE, and DIPOLE with only the local metric regularizer. For each method, all combinations of the listed hyperparameters are computed. For DIPOLE, we initialize with Isomap and minimize the following 
\begin{equation}
	\begin{split}
\label{eqn:dipole}
\frac{1-\alpha}{2} \binom{|X|}{k}^{-1}\sum_{\substack{S \subset X\\ |S| = k}} \sum_{p=0}^1 W_2^2(D_p(S, d_H), D_p(S, d_L)) + \\
\alpha \sum_{(i, j) \in P} (d_X(x_i, x_j) - \|a_{i} - a_{j}\|)^2,
	\end{split}
\end{equation}
where $D_p(S, d)$ is the degree $p$ persistence diagram of the Rips complex of the metric space $(S, d)$.

 For DIPOLE, the edges for the local metric regularizer are chosen to be either the $3$ or $5$ nearest neighbors in the Euclidean distance in $\R^D$. The subset size $k$ is chosen to be either $32$ or $64$. The learning rate and $\alpha$ are chosen to be in $\{0.1, 1, 2\}$ and $\{0.01, 0.1\}$, respectively. We do not tune any hyperparameters for Isomap, using the default value of $5$ for the number of nearest neighbors. For the local metric regularizer, we choose the number of edges in the neighborhood graph to be in $\{2, 3, 5, 10\}$ and the learning rate to be in $\{0.1, 1, 2\}$. For t-SNE, we choose $32$ evenly spaced values between $2$ and $75$ for the perplexity hyperparameter. Finally, for UMAP, we choose the number of neighbors in $\{4, 8, 16, 32, 64, 128\}$ and the minimum distance parameter in $\{0, 0.1, 0.25, 0.5, 0.8\}$.

\begin{figure}
	\centering
	\includegraphics[width=0.5\textwidth]{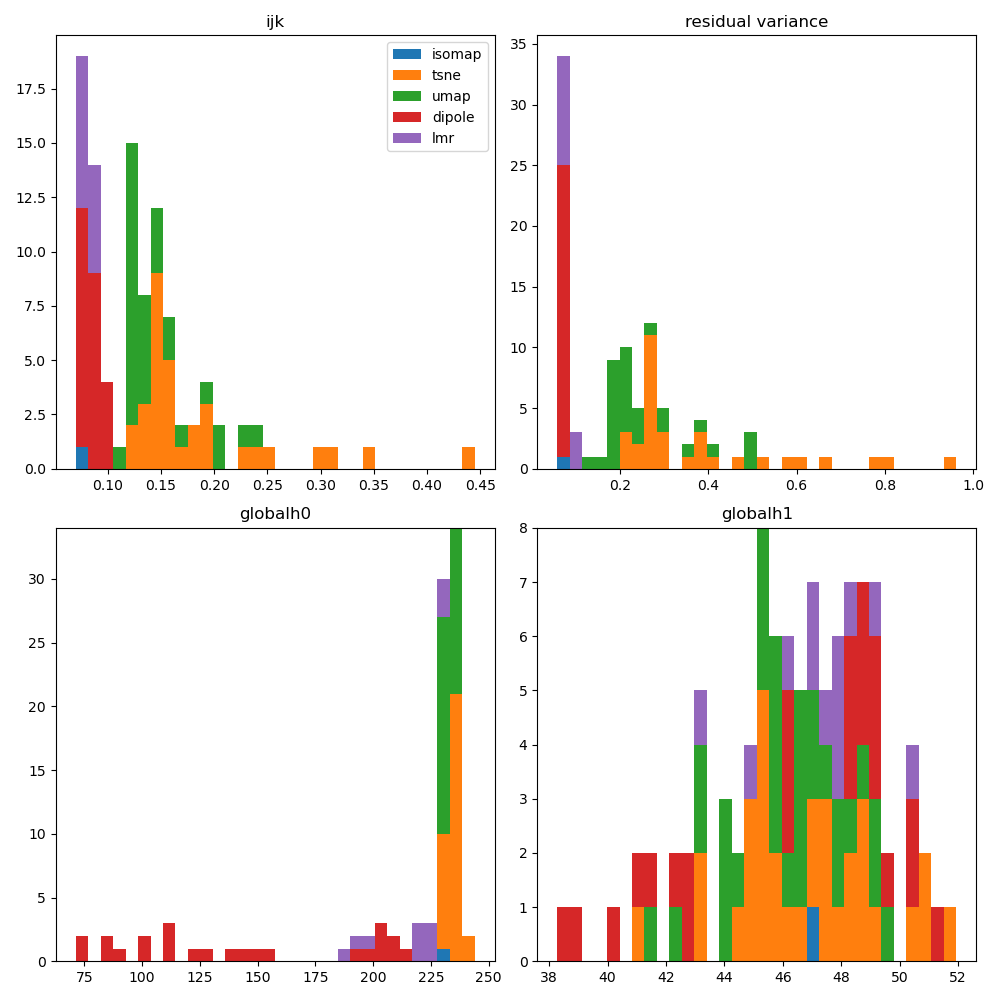}
	\caption{Results for the mammoth dataset.}
	\label{fig:mammothmetrics}
\end{figure}

\begin{figure}
	\centering
	\includegraphics[width=0.5\textwidth]{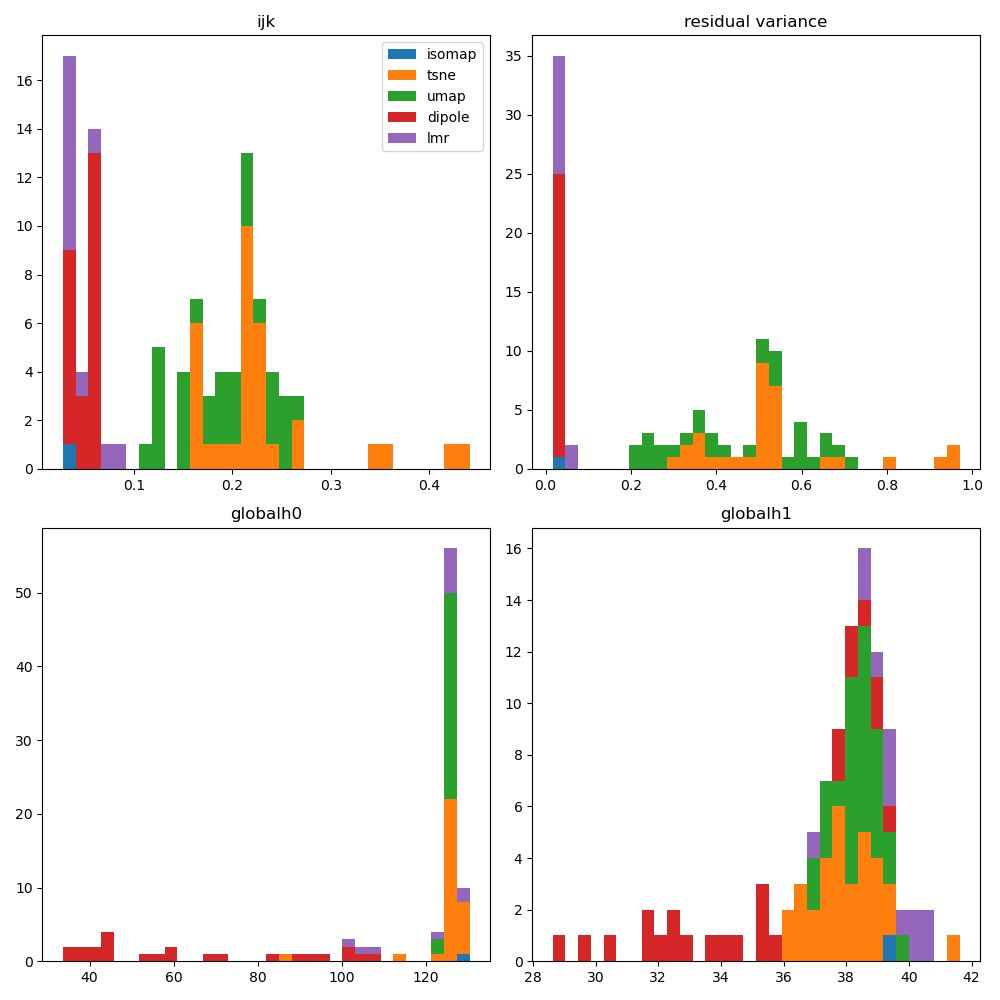}
	\caption{Results for the brain dataset.}
	\label{fig:brainmetrics}
\end{figure}

\begin{figure}
	\centering
	\includegraphics[width=0.5\textwidth]{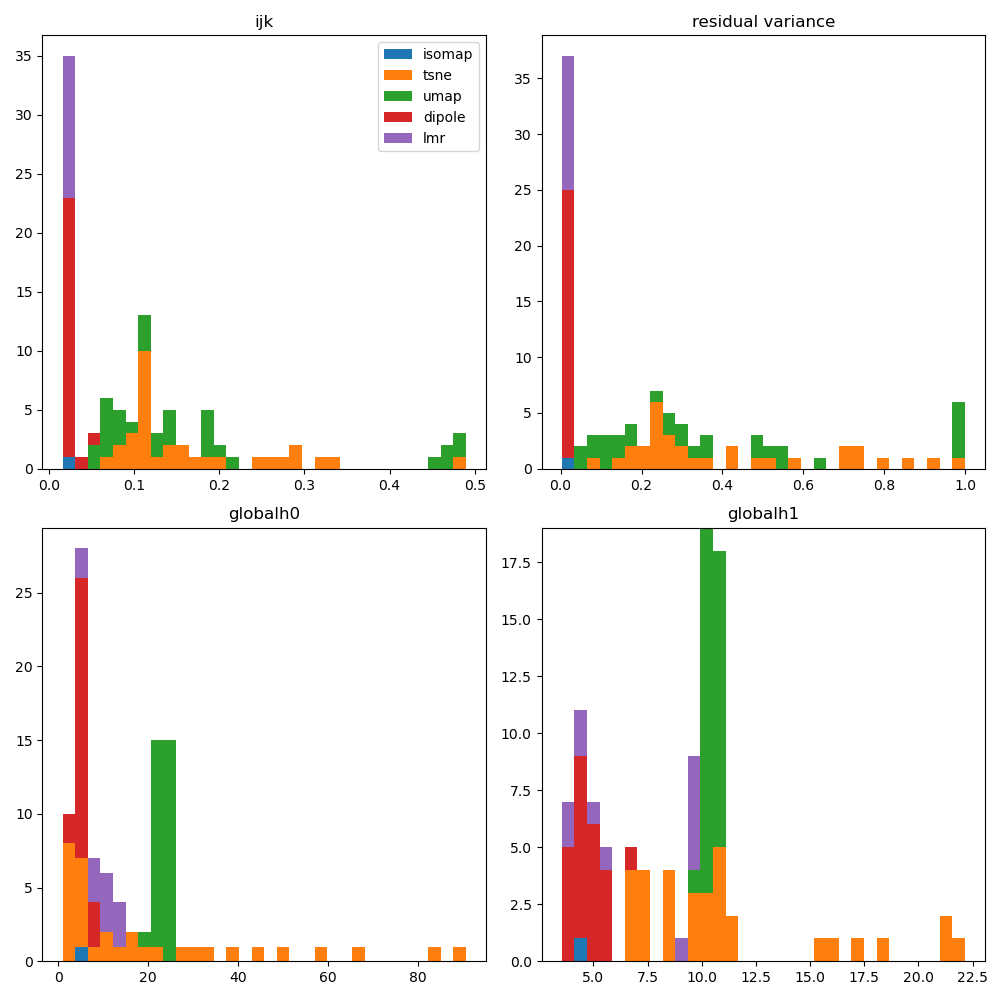}
	\caption{Results for the swiss roll with a hole dataset.}
	\label{fig:swissholemetrics}
\end{figure}

\begin{figure}
	\centering
	\includegraphics[width=0.5\textwidth]{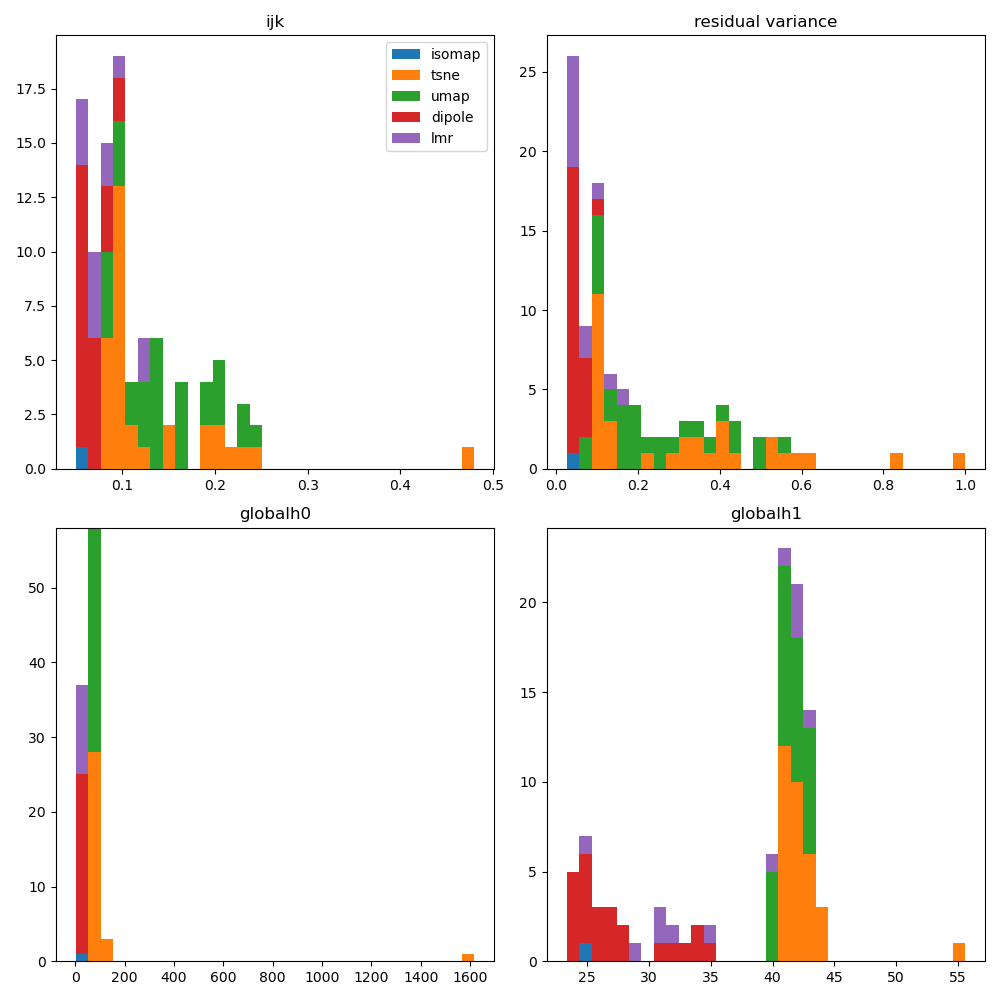}
	\caption{Results for the \emph{Stanford faces} dataset.}
	\label{fig:stanfordfacesmetrics}
\end{figure}

The results for the mammoth, brain, swiss roll with holes, and \emph{Stanford faces} datasets are shown in Figure \ref{fig:mammothmetrics}, \ref{fig:brainmetrics}, \ref{fig:swissholemetrics}, and \ref{fig:stanfordfacesmetrics}. Each figure has four panels corresponding to each of the four tests: $ijk$, residual variance, global degree $0$ persistent homology, and global degree $1$ persistent homology. In each panel, a histogram of scores is plotted for each method over all choices of hyperparameters. Since different methods use hyperparameter grids of different sizes, the histograms for some methods contain more points than the histograms for others; in particular, Isomap only has a single bar of height one in each figure because there was no hyperparameter tuning. 

In all the experiments, most of the hyperparameter choices for DIPOLE result in competitive embeddings with respect to the $ijk$ and residual variance tests. Consistent with the images in Section \ref{sec:visualization}, DIPOLE significantly outperforms all other methods in the degree $0$ and $1$ homology tests for the mammoth and brain datasets.

 One limitation of our scheme for visualizing quantitative scores is that the distribution of scores for each test does not capture their full joint distribution, in that hyperparameters which perform well for one test might not perform well for another test. Readers interested in comparing quantitative scores at this level of granularity are invited to access the publicly available code, from which full tables of scores can be accessed.

 Finally, in Figure \ref{tbl:speed}, we give the average running time of each method and dataset over the hyperparameter grid. Two factors which contribute to the longer runtime of DIPOLE are the time spent computing an initial embedding, done in our experiments with Isomap, and the time spent computing persistence diagrams of size $k$ subsets. In our code, persistence diagrams were computed sequentially, but computing multiple persistence diagrams in parallel, i.e. using a batch size larger than $1$, would result in a substantial decrease in the runtime of DIPOLE. Indeed, this parallelizability was a motivating consideration in developing distributed persistence and will be a feature of future iterations of DIPOLE.
 \begin{figure}
 \begin{center}
	\begin{tabular}{ |c|c|c|c|c|c| } 
	 \hline
	 Dataset & Isomap & t-SNE & UMAP & DIPOLE & LMR \\
	 \hline 
	 Mammoth & 8.50 & 23.86 & 13.68 & 79.10 & 20.17 \\
	 \hline
	 Brains & 6.09 & 17.93 & 10.63 & 74.29 & 15.30 \\
	 \hline
	 Swiss hole & 9.71 & 27.56 & 16.10 & 80.68 & 24.14 \\
	 \hline
	 Faces & 6.74 & 15.10 & 8.56 & 75.01 & 16.78\\
	 \hline
	\end{tabular}
	\caption{Average running time in seconds of each dimensionality reduction method and dataset over hyperparameter grid. Note that the computation of persistence diagrams of subsets is parallelizable and, though not implemented here, would result in a significant increase in the speed of DIPOLE.}
	\label{tbl:speed}
\end{center}
\end{figure}

\section{Conclusion}
\label{sec:conclusion}
Distributed persistence summarizes the multi-scale topology of a data set in a way that is scalable, robust, and differentiable. Combining distributed persistence with local metric information, {\bf DIPOLE} forms a new approach to dimensionality reduction based on structured, topological invariants, and with strong theoretical guarantees.

The pipeline proposed in this paper is very flexible, and open to various modifications and improvements. These include different schemes for sampling subsets (e.g. random point neighborhoods for studying mesoscale phenomena in materials science), other topology-based metrics, alternative persistence constructions (e.g. witness complexes), etc. 

Possible directions for future research include building a form of {\bf DIPOLE} for shape registration and interpolation, as well as incorporating distributed persistent homology into machine learning pipelines for regression and classification.

\end{document}